\documentclass[letterpaper, 10 pt, conference]{ieeeconf}  

\IEEEoverridecommandlockouts                              
\usepackage{cite}
\usepackage{amsmath,amssymb,amsfonts}
\usepackage{algorithmic}
\usepackage{graphicx}
\usepackage{textcomp}
\usepackage{setspace}
\usepackage{booktabs}
\usepackage{multirow}

\usepackage{epsfig}
\usepackage{times} 
\usepackage{svg}
\usepackage[linesnumbered,ruled,vlined]{algorithm2e} 

\usepackage[nolist, nohyperlinks]{acronym}

\usepackage{bm}
\usepackage{breqn}
\usepackage{amsmath}
\usepackage{amssymb}
\usepackage{tabularx}

\usepackage{amsthm}

\newcommand{\Znn}{\mathbb{Z}_{\geq 0}}
\newcommand{\Zp}{\mathbb{Z}_{>0}}
\newcommand{\R}{\mathbb{R}}

\newcommand{\xbf}{\mathbf{x}}
\newcommand{\ubf}{\mathbf{u}}

\newcommand{\Ccal}{\mathcal{C}}

\newcommand{\Ncal}{\mathcal{N}}
\newcommand{\Ocal}{\mathcal{O}}

\newcommand{\Rcal}{\mathcal{R}}

\newcommand{\Tcal}{\mathcal{T}}
\newcommand{\Ucal}{\mathcal{U}}

\newcommand{\Xcal}{\mathcal{X}}
\newcommand{\Ycal}{\mathcal{Y}}

\newcommand{\eqn}[1]{\begin{align} #1 \end{align}}
\newcommand{\eqnN}[1]{\begin{align*} #1 \end{align*}}
\newcommand{\seqn}[2][]{
\begin{subequations}
    #1
\begin{align} #2 \end{align}
\end{subequations}
}

\newcommand{\argmin}[1]{\underset{\substack{#1}}{\text{argmin}}}

\theoremstyle{plain}

\theoremstyle{definition}
\newtheorem{assumption}{Assumption}
\newtheorem{remark}{Remark}
\newtheorem{prop}{Proposition}

\makeatletter
\let\NAT@parse\undefined
\makeatother
\usepackage{hyperref}
\hypersetup{
colorlinks, 
    linkcolor={red!50!black},
    citecolor={blue!50!black},
    urlcolor={blue!80!black}
}
\usepackage{cleveref}

\crefformat{problem}{Problem~#2#1#3}
\crefformat{assumption}{Assumption~#2#1#3}
\crefformat{prop}{Proposition~#2#1#3}
\usepackage[table]{xcolor} 
\definecolor{d4ormcolor2}{RGB}{248,235,245} 
\definecolor{mbdcolor}{RGB}{255,239,220} 
\definecolor{d4ormcolor}{RGB}{232,247,232} 
\definecolor{ourscolor}{RGB}{225,245,245} 
\definecolor{distcolor}{RGB}{240,240,240} 
\definecolor{oomcolor}{RGB}{200,40,40}

\title{\LARGE \bf Distributed Model-Based Diffusion for Scalable Multi-Robot \\Trajectory Optimization
}
\author{
Haejoon Lee, Xinyi Wang, Taekyung Kim, and Dimitra Panagou
\thanks{All authors are with the Robotics Department, University of Michigan, Ann Arbor, MI, USA
        {\tt \{haejoonl, xinywa, taekyung, dpanagou\}@umich.edu}}
\thanks{$^a$Project Page: \href{https://distributed-mbd.github.io/dmbd.github.io/}{https://distributed-mbd.github.io/dmbd.github.io/}
}}

\begin{document}

\maketitle
\thispagestyle{empty}
\pagestyle{empty}

\begin{abstract}
Trajectory optimization for multi-robot systems remains a critical challenge, particularly when navigating highly non-convex, non-linear, and non-differentiable environments. While Model-Based Diffusion (MBD) has recently emerged as a promising sampling-based optimization paradigm for single-robot trajectory generation, extending it to multi-robot systems results in a centralized, high-dimensional inference problem that (i) suffers from poor sample efficiency due to the curse of dimensionality and (ii) requires global access to all robots' dynamics, constraints, and objectives. To address this, we propose Distributed Model-Based Diffusion (DMBD), a distributed server-robot method that decomposes the reverse diffusion process into local conditional reverse diffusion processes. This decomposition enables each robot to iteratively perform denoising independently within its own control subspace while conditioning on the current trajectory estimates of the other robots that are aggregated and broadcast by the server. Extensive simulations in goal swapping, multi-floor coverage, parking, and rush-hour scenarios demonstrate that DMBD achieves strong scalability, solving many challenging coordination tasks with sub-second computation time and outperforming existing baselines. \href{https://distributed-mbd.github.io/dmbd.github.io/}{[Project Page]}$^a$
\end{abstract}

\section{Introduction}

Trajectory optimization for collision-free multi-robot coordination in shared workspaces remains a critical challenge. A popular approach is gradient-based optimization, which uses cost gradient information to seek optimal solutions~\cite{nocedal2006numerical, van2017distributed, marcucci2023motion, shorinwa2024distributed}. However, these methods often struggle in highly nonconvex or non-differentiable settings, where they are susceptible to poor local minima~\cite{marcucci2023motion} and incur substantial computational costs as the team size increases~\cite{hopcroft1984complexity}.

To circumvent the limitations of gradient-based optimization methods, sampling-based optimization (SBO) methods have emerged as a powerful alternative. By casting the trajectory planning problem as a probabilistic inference problem over the trajectory space, SBO planners have proven highly successful in navigating complex, non-smooth, and non-convex environments~\cite{rubinstein1999cross,williams2018information, pan2024model, xue2025full}. Representative examples include the Cross Entropy Method (CEM)~\cite{rubinstein1999cross,botev2013cross}, Model Predictive Path Integral (MPPI)~\cite{williams2018information,kim2022smooth}, and variational MPCs~\cite{jiang2024distributed,pacelli2026sampling}.

Recently, Model-Based Diffusion (MBD) has emerged as a promising SBO method inspired by generative denoising diffusion probabilistic models~\cite{ho2020denoising, pan2024model}. While algorithmically similar, it departs from other SBO approaches, as MBD leverages the denoising diffusion framework to optimize over a multi-modal trajectory distribution via iterative denoising process. Furthermore, in contrast to other diffusion-based planners that rely on score networks learned from offline datasets~\cite{janner2022planning, zhong2023guided, liang2026discrete}, MBD directly exploits known system dynamics and task objectives during inference, enabling \emph{learning-free} generation of diverse, low-cost trajectories.

Since its introduction, several extensions of MBD have been proposed. For example, gradual constraint enforcement was introduced in~\cite{mishra2025eb} to progressively improve solution quality throughout the denoising process, while strict safety guarantees were incorporated directly into the denoising procedure in~\cite{kim2025safe}. Adaptive importance sampling via CEM was employed in~\cite{golembeski2025importance} to improve the performance of MBD. Related diffusion-inspired denoising schemes were adopted in~\cite{xue2025full} to present a real-time controller.

MBD's ability to optimize over multi-modal trajectory distributions is appealing for multi-robot planning, where deadlocks often arise from poor local minima. However, despite its success in single-agent settings, extending MBD to multi-robot systems remains challenging for two reasons.

First, MBD suffers from the \emph{curse of dimensionality} as the number of robots increases. Because the dimension of the joint trajectory space grows with the number of robots, MBD, like other SBO methods, experiences exponentially decreasing sampling efficiency and optimization performance in large-scale multi-robot settings. D4orm~\cite{zhang2025d4orm} mitigates this issue through iterative deformation updates, but its reverse diffusion process remains centralized, requiring a single processor to sample and denoise trajectories in the joint trajectory space. A recent work~\cite{he2026motion} combines locally executed MBD planning with centralized deconfliction, which may require an increasing number of rounds of deconfliction and subsequent MBD replanning as the number of robots grows and the environment becomes more congested.

Second, MBD assumes \emph{centralized access to all objectives, constraints, and robot dynamics} to perform denoising. In practical multi-robot systems, particularly heterogeneous teams, robots often possess distinct objectives, dynamics, and local constraints. Aggregating this information can incur substantial communication overhead and/or places a significant computational and memory burden on a single machine, further limiting scalability.


To address these challenges, this paper introduces Distributed Model-Based Diffusion (DMBD), a distributed, server-robot algorithm for multi-robot trajectory optimization. DMBD decomposes the global trajectory optimization problem into local inference subproblems that are executed independently and in parallel by each robot. Rather than estimating a global score function over the full joint trajectory distribution, each robot estimates a local conditional score function to denoise its own trajectory using only local information. At each denoising step, this local score is conditioned on the current trajectory estimates of the other robots, which are aggregated and broadcast by the server.

In contrast to existing distributed sampling-based approaches~\cite{jiang2024distributed,wan2021cooperative}, which require agents to exchange multiple trajectory samples and therefore incur substantial communication overhead, each robot in DMBD transmits only a single trajectory per denoising step. Unlike~\cite{streichenberg2023multi}, our method does not require robots to know other robots' local objective functions. Furthermore, unlike gradient-based approaches such as~\cite{pavlasek2024stein}, DMBD is entirely zeroth-order and does not assume differentiability. Consequently, DMBD achieves scalable multi-robot coordination while retaining the efficiency and multi-modal trajectory generation of MBD.

Our contributions are as follows:
\begin{itemize}
\item We propose \textbf{Distributed Model-Based Diffusion (DMBD)}, a server-coordinated distributed MBD that decomposes multi-robot trajectory optimization into local conditional diffusion processes. Rather than estimating a global score function over the joint trajectory, each robot independently estimates a local conditional score function and denoises its own trajectory conditioned on the trajectories of other robots. 

\item We establish the relationship between MBD and DMBD by theoretically bounding the discrepancy between the global and the local conditional score functions.

\item We validate DMBD through extensive simulations across diverse multi-robot scenarios, including goal swapping, multi-floor coverage, parking, and rush-hour tasks, demonstrating superior scalability and consistent performance gains over existing baselines.
\end{itemize}

\section{Preliminaries}
We denote the non-negative and positive integers by $\mathbb{Z}{\geq 0}$ and $\mathbb{Z}{>0}$, respectively, and the $p\times p$ identity matrix by $I_p$.

Consider a multi-robot system of $N$ robots connected to a server. Each robot $k\in \Rcal:=\{1,\dots,N\}$ is modeled by 
\eqn{\label{eq:single_dynamics}
\xbf^k_{t+1}= f_k(\xbf_t^k, \ubf^k_t),}
where $\xbf_t^k\in \Xcal_k\subset \R^{n_k}$ and $\ubf_t^k\in \Ucal_k\subset \R^{m_k}$ are the state and control input at time step $t\in \Znn$, respectively. We denote $n=\sum_{k\in \Rcal}n_k$ and $m=\sum_{k\in\Rcal}m_k$. The function $f_k:\Xcal_k \times \Ucal_k \to \Xcal_k$ represents the dynamics of robot $k$. We stack the states and control inputs of robots at time $t$ into:
\eqn{\xbf_t = \begin{bmatrix}
    {\xbf^1_t}^\top & \cdots & {\xbf^N_t}^\top
\end{bmatrix}^\top\in \Xcal := \prod_{k=1}^N\Xcal_k, \\
\ubf_t = \begin{bmatrix}
    {\ubf^1_t}^\top & \cdots & {\ubf^N_t}^\top
\end{bmatrix}^\top \in \Ucal := \prod_{k=1}^N\Ucal_k.
}

Then, the global system can be written as:
\eqn{\label{eq:total_dynamics}
\xbf_{t+1}= F(\xbf_t, \ubf_t),}
where $F:\Xcal\times \Ucal \to \Xcal$ is the dynamics of all robots. 

For a planning horizon $T\in\Zp$, we define the control trajectory of all robots as
\eqnN{Y:= \begin{bmatrix}
    Y_1^\top & \cdots & Y_N^\top
\end{bmatrix}^\top \in \Ucal^T := \prod_{k=1}^N \Ucal_k^T \subset \R^{mT},}
where \eqnN{
\label{eq:local_traj}
Y_k :=
\begin{bmatrix}
(\ubf_0^k)^\top \
\cdots \ 
(\ubf_{T-1}^k)^\top
\end{bmatrix}^\top \in \Ucal^T_k := \prod_{q=1}^T \Ucal_k\subset \R^{m_kT}
}
denotes the local control trajectory of robot $k$. We denote $Y_{-k}$ as the collection of control trajectories of all robots except robot $k$.

Given the initial local state $\xbf_0^k$ and local control trajectory $Y_k$, the state trajectory of robot $k$ is uniquely determined through the recursive rollout of~\eqref{eq:single_dynamics}, which we denote by
\eqnN{\tau_{k}:=\tau_k(Y_k) = (\xbf_0^k, f_k(\xbf_0^k, \ubf_0^k), f_k(f_k(\xbf_0^k, \ubf_0^k), \ubf_1^k), \cdots).}

Similarly, given the initial global state $\xbf_0$ and global control trajectory $Y$, the global state trajectory is determined through the recursive rollout of~\eqref{eq:total_dynamics}, which we denote by $\tau:=\tau(Y)$. We further denote by $\tau_{-k}:=\tau_{-k}(Y_{-k})$ the collection of state trajectories of all robots except robot $k$.

Each robot $k$ aims to minimize a local objective function 
\eqnN{
J_k(Y_k):=J_k(Y_k, \tau_k(Y_k)) = l_{T,k}(\xbf_T^k) + \sum_{t=0}^{T-1} l_{t,k}(\xbf_t^k, \ubf_t^k),
}
where $l_{T,k}:\Xcal_k \to \R$ and $l_{t,k}:\Xcal_k \times \Ucal_k \to \R$ are terminal and stage costs.

In addition, robot $k$ is subject to a set of local constraints $\Ccal_k$. A constraint $p\in \Ccal_k$ is given as
\eqnN{
g_{k,p}(Y) := g_{k,p}(\tau_k, \tau_{-k}),
}
which is satisfied when non-positive. Constraint $p$ may depend not only on its own state trajectory but also on the state trajectories of other robots. We denote the index set of robots involved in this constraint including robot $k$ by $\Rcal_{k,p} \subseteq \Rcal$. We assume that whenever robots are involved in the same constraint function, they all share the same knowledge of the functional form and parameters of the constraints. That is,
\begin{assumption}
\label{assum:same_constraint}
For any $k\in\Rcal$, $p\in\Ccal_k$, and $j\in\Rcal_{k,p}$, the coupled constraint is symmetric with respect to the participating robots, i.e., $\exists p' \in \Ccal_j$ such that
$\Rcal_{j,p'} = \Rcal_{k,p}$ and $
g_{j,p'}(Y)=g_{k,p}(Y)$, $\forall Y \in \Ucal^T$.
\end{assumption}

We make no assumptions regarding the convexity or continuity of $J_k$ and $g_{k,p}$ to reflect the complexities of real-world robotic tasks. Under these settings, the centralized trajectory optimization problem is given by
\seqn[\label{eq:optimization}]{
\argmin{Y} \ & J(Y)= \sum_{k=1}^N J_k(Y_k) \\
\text{s.t.}\quad  &\xbf_{t+1}= F(\xbf_t, \ubf_t), \quad \forall t \in \{0, \dots, T-1\}, \\
& \ubf_t \in \Ucal, \quad \forall t \in \{0, \dots, T-1\}, \\
& g_{k,p}(Y) \leq 0, \ \forall p \in \Ccal_k, \forall k \in \Rcal.}

Traditionally, solving~\eqref{eq:optimization} requires solving a nonlinear program. However, such approaches often converge to poor local minima and may even fail to converge when applied to non-convex and non-smooth optimization problems.

\subsection{Sampling-Based Trajectory Optimization}

An alternative paradigm that has recently gained significant attention is sampling-based optimization (SBO) which essentially casts the problem as an inference problem over the trajectory space~\cite{williams2018information, pan2024model}. We first reformulate the centralized problem~\eqref{eq:optimization} through the lens of centralized probabilistic inference. Note that we redefine the inference problem into the distributed form later in~\Cref{sec:method}.

We define a target distribution $p_0(Y)$ over the space of trajectories: 
\eqn{\label{eq:target_dist}
p_0(Y) \propto \exp(-\phi(Y)/\lambda),}
with temperature $\lambda >0$, where we define global cost function
\eqn{\phi(Y):= J(Y) + \sum_{k\in\Rcal, p\in \Ccal_k} \frac {1}{|\Rcal_{k,p}|} g_{k,p}(Y).\label{eq:actual_global_cost}}

This formulation converts the constrained problem~\eqref{eq:optimization} into an unconstrained probabilistic formulation by defining $p_0$ such that lower-cost trajectories receive higher probability mass while penalizing constraint violations. Hence, trajectories with smaller cost $\phi(Y)$ are more likely under $p_0$.

\subsection{Model-Based Diffusion (MBD)}
In a centralized setting, since $J$, $g_{k,p}$, and $F$ are all known, we can evaluate the probability $p_0(Y)$ of a trajectory $Y\in \Ucal^T$. Nevertheless, because $\phi$ can be an arbitrary function, sampling directly from $p_0$ is generally intractable. Thus, MBD iteratively refines samples starting from a Gaussian distribution~\cite{pan2024model}. MBD is characterized by two processes:

\textbf{Forward (noising) process:}
The forward process gradually transforms a clean trajectory $Y^{(0)} \sim p_0$ into white noise over $M$ discrete steps. Given a variance schedule $\{\beta_1, \dots, \beta_M\}$, we define $\alpha_i := 1 - \beta_i$ and $\bar{\alpha}_i := \prod_{j=1}^i \alpha_j$. The noising step at iteration $i$ is defined by:
\begin{equation}
Y^{(i)} = \sqrt{\alpha_i} Y^{(i-1)} + \sqrt{1 - \alpha_i} \epsilon_i, \quad \epsilon_i \sim \mathcal{N}(0, I_{mT}).
\end{equation}
By utilizing the property of Gaussian sums, we can directly sample $Y^{(i)}$ from $Y^{(0)}$ as:
\eqn{\label{eq:p_i_definition}
p_{i\mid0}(\cdot \mid Y^{(0)}) = \Ncal(\sqrt{\bar{{\alpha}}_i} Y^{(0)}, (1 - \bar{\alpha}_i)I_{mT}).
}
As $M \to \infty$, the distribution $p_M(Y^{(M)})$ approaches a Gaussian distribution $\mathcal{N}(0, I_{mT})$.

\textbf{Reverse (denoising) process:}
The reverse process aims to recover a low-cost trajectory by starting from $Y^{(M)} \sim \mathcal{N}(0, I_{mT})$ and moving back toward the target distribution defined by $p_0$. Unlike traditional diffusion models~\cite{janner2022planning, zhong2023guided, liang2026discrete} that learn score functions via neural networks, MBD utilizes knowledge of $J$, $F$, and $g_{k,p}$ to perform a Monte Carlo Score Ascent~\cite{pan2024model}. At each denoising step $i\in\{M,\dots, 1\}$, the following update is performed:
\begin{equation}
\label{eq:global_update}
Y^{(i-1)} = \frac{1}{\sqrt{\alpha_i}} \left( Y^{(i)} + (1 - \bar{\alpha}_i) \nabla_{Y^{(i)}} \log p_i(Y^{(i)}) \right).
\end{equation}
The score function $\nabla_{Y^{(i)}}  \log p_i(Y^{(i)})$ is estimated with $S$ candidate trajectories $\{\Ycal^{(i)}_{s}\}_{s=1}^S$ sampled from the distribution $\Ncal\left(\frac{Y^{(i)}}{\sqrt{\bar{\alpha}_i} }, \left(\frac 1 {\bar{\alpha}_i} -1\right)I_{mT}\right)$ as below:
\eqn{\label{eq:gradient}
\nabla_{Y^{(i)}}  \log p_i(Y^{(i)})\approx - \frac{Y^{(i)}}{1-\bar{\alpha}_i} + \frac {\sqrt{\bar{\alpha}_i}}{1-\bar{\alpha}_i}\hat{Y}^{(i)},}
where $\hat{Y}^{(i)}$ is a weighted average of the sampled candidate
control trajectories $\{\Ycal^{(i)}_{s}\}_{s=1}^S$, i.e., 
\eqn{\hat{Y}^{(i)} = \frac{\sum_{s=1}^S \Ycal^{(i)}_s p_0(\Ycal^{(i)}_s)}{\sum_{s=1}^S p_0(\Ycal^{(i)}_s)}.}

This estimated score~\eqref{eq:gradient} provides the steepest direction that shifts the current estimate $Y^{(i)}$ toward high-density regions in $p_0$ and, consequently, lower-cost trajectories.

In essence, at each denoising step, MBD exploits explicit \emph{knowledge of the model}, including dynamics, objectives, and constraints, to iteratively sample, evaluate, and refine candidate trajectories toward the optimal solution of $\phi$, without requiring any learning process. For further details on its convergence properties and the underlying intuition, we refer readers to~\cite{yi2026global}.

\begin{figure*}[t]
    \centering
    \includegraphics[width=0.9\linewidth]{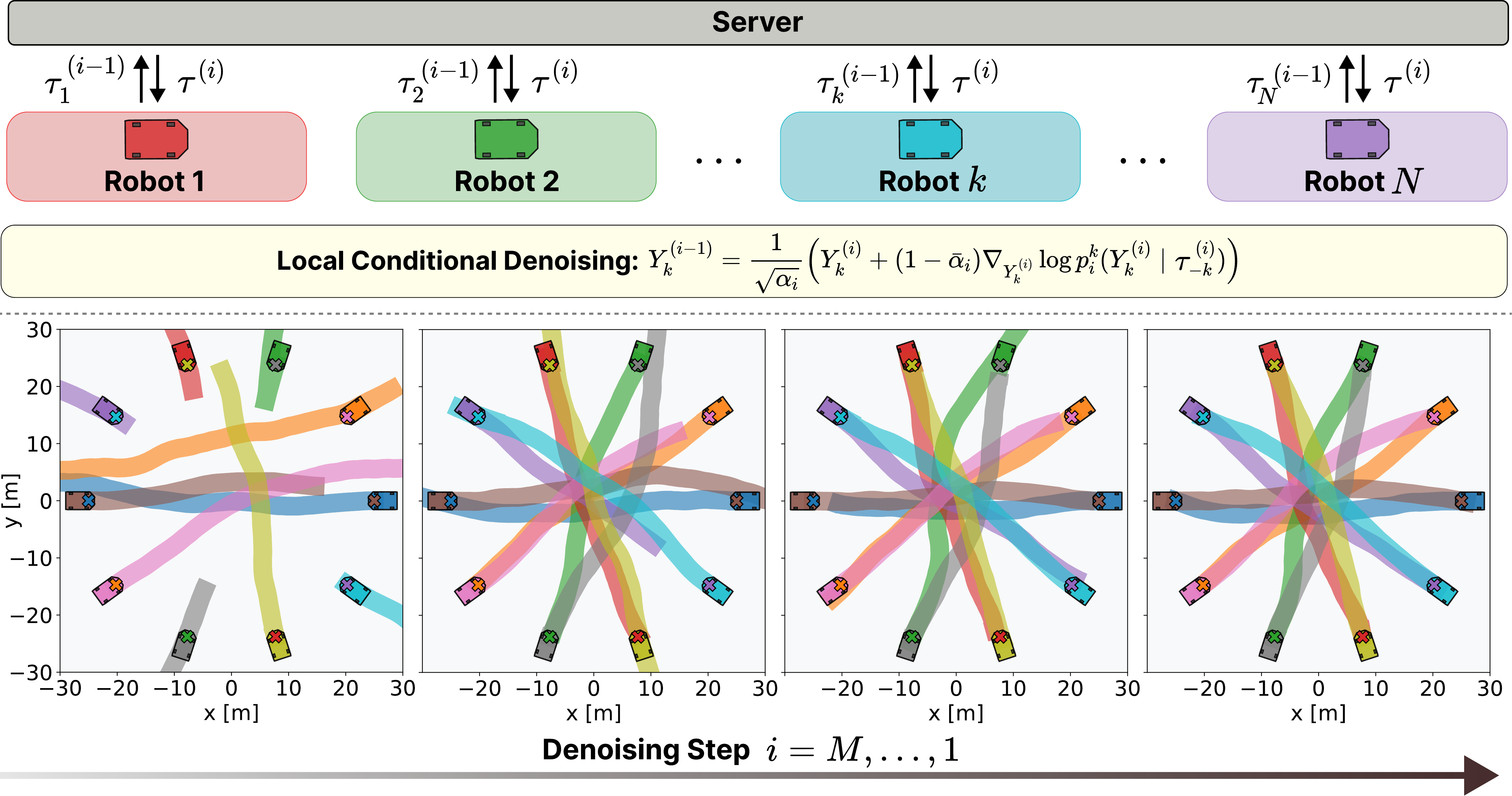}
    \caption{Visualization of Distributed Model-Based Diffusion (DMBD). For every denoising step, each robot independently (i) performs local conditional denoising updates on its control trajectory conditioned on the state trajectories of other robots aggregated and broadcast by the server and (ii) transmits its updated state trajectory back to the server, which aggregates and rebroadcasts them.}
    \label{fig:hero}
\vspace{-10pt}
\end{figure*}

\section{Distributed Model-Based Diffusion (DMBD)}
\label{sec:method}
While MBD provides an efficient mechanism for sampling low-cost trajectories under $p_0$, it has two key limitations. First, it assumes access to the global objective $J$, global dynamics $F$, and all constraint functions $g_{k,p}$. In a multi-robot setting, however, each local objective $J_k$, dynamics $f_k$, and constraints $g_{k,p}$ may only be available to robot $k\in\Rcal$. Second, MBD scales poorly with the number of robots, since the dimensionality of the joint control space grows with $N$, reducing sampling efficiency and thus performance.

To address these issues, we propose \emph{Distributed Model-Based Diffusion} (DMBD), a server-robot framework that decomposes the joint reverse diffusion process into distributed conditional denoising processes. DMBD shares the same noising process as vanilla MBD but differs in its denoising process. At each denoising step, robot $k$ locally updates its control trajectory $Y_k \in \Ucal_k^T$ using its local objective, dynamics, and constraints, while being conditioned on the current state trajectory estimates of other robots aggregated and broadcast by the server. Each robot then transmits its updated state trajectory back to the server, which aggregates and rebroadcasts them for the next step (see~\Cref{fig:hero}).

We first define a local conditional target distribution for each robot $k\in\Rcal$:
\eqn{\label{eq:local_dist}
p_0^k\left(Y_k\mid \tau_{-k}\right) \propto \exp(-\phi_k(Y_k;\tau_{-k})/\lambda),}
where $\lambda>0$ and the local cost function 
\eqn{\phi_k(Y_k;\tau_{-k}):= J_k(Y_k) + \sum_{p\in \Ccal_k} g_{k,p}(\tau_{k}, \tau_{-k}).\label{eq:actual_local_cost}}

DMBD begins with each robot $k\in\Rcal$ sampling $Y^{(M)}_k\sim \mathcal{N}(0, I_{m_kT})$ and sending its state trajectory $\tau_k^{(M)}:=\tau_k(Y_k^{(M)})$ to the server, which aggregates into
\eqn{
\label{eq:aggregated}
\tau^{(M)} :=\tau(Y^{(M)}) = \begin{bmatrix}
   \tau_1^{(M)} & \cdots &  \tau_N^{(M)} 
\end{bmatrix}
}
and broadcasts~\eqref{eq:aggregated} back to each robot. Upon receiving $\tau^{(M)}$, each robot $k$ performs the local conditional denoising process over $M$ steps.

\textbf{Local conditional reverse (denoising) process:} At each step $i =M, \dots, 1$, each robot locally samples and refines its control input trajectory $Y^{(i)}_k$ with respect to its local cost $\phi_k$~\eqref{eq:actual_local_cost} while being conditioned on the shared state trajectories of other robots $\tau_{-k}^{(i)}:=\tau_{-k}(Y^{(i)}_{-k})$. The  denoising update is given by 
\eqn{
\label{eq:local_update}
Y^{(i-1)}_k = \frac{Y^{(i)}_k}{\sqrt{\alpha_i}} + \frac{1 - \bar{\alpha}_i}{\sqrt{\alpha_i}} \nabla_{Y^{(i)}_k} \log p_i^k\left(Y^{(i)}_k \mid \tau_{-k}^{(i)}\right).}

Unlike the centralized update~\eqref{eq:global_update}, each robot $k$ estimates its local conditional score function $\nabla_{Y^{(i)}_k} \log p_i^k\left(Y^{(i)}_k \mid \tau_{-k}^{(i)}\right)$. The score function is estimated by sampling and evaluating $S_k$ candidate control trajectories $\{\Ycal^{(i)}_{k,s}\}_{s=1}^{S_k}$ of its own from the distribution $\Ncal\left(\frac{Y^{(i)}_k}{\sqrt{\bar{\alpha}_i} }, \left(\frac 1 {\bar{\alpha}_i} -1\right)I_{m_kT}\right)$ while fixing $\tau_{-k}^{(i)}$. Thus, we have
\eqn{\label{eq:local_gradient}
\nabla_{Y^{(i)}_k} \log p_i^k\left(Y^{(i)}_k \mid \tau_{-k}^{(i)}\right)\approx - \frac{Y^{(i)}_k}{1-\bar{\alpha}_i} + \frac {\sqrt{\bar{\alpha}_i}}{1-\bar{\alpha}_i}\hat{Y}^{(i)}_k,}
where 
\eqn{\hat{Y}^{(i)}_k = \frac{\sum_{s=1}^{S_k} \Ycal^{(i)}_{k,s} p_0^k\left(\Ycal^{(i)}_{k,s} \mid \tau_{-k}^{(i)}\right)}{\sum_{s=1}^{S_k} p_0^k\left(\Ycal^{(i)}_{k,s} \mid \tau_{-k}^{(i)}\right)}.\label{eq:local_weighted}}
Intuitively, $\hat{Y}^{(i)}_k$ is a locally weighted average of candidate control trajectories for robot $k$, where the weights depend on the conditional likelihood induced by the local target distribution $p_0^k(\cdot \mid \tau_{-k}^{(i)})$.

After the update, the robots obtain their new state trajectories $\tau_k^{(i-1)}$ through rollout and transmit to the server, which aggregates them into the global trajectory $\tau^{(i-1)}$ as in~\eqref{eq:aggregated}, and broadcasts it back to all robots. This cycle iterates until $i=1$, as detailed in~\Cref{alg:dmbd}.

\begin{algorithm}[t]
\SetKwInOut{Inputs}{Inputs}
\SetKwInOut{Outputs}{Outputs}
\small
\caption{Distributed MBD (DMBD)}
\label{alg:dmbd}

\Inputs{noise schedule $\{\bar{\alpha}_i\}_{i=1}^M$, denoising steps $M$, numbers of samples $\{S_k\}_{k=1}^N$, temperature $\lambda>0$}
\Outputs{optimized trajectories $\{Y_k^{(0)}\}_{k=1}^N$}

\tcp{Trajectory Initialization}
\For{$k = 1,\dots, N$}{
Sample $Y^{(M)}_k \sim \mathcal{N}(0, I_{m_kT})$

Send local state trajectory $\tau_k^{(M)}$ to server
}

\For{$i=M,\dots,1$}{
    \tcp{Server Aggregation}
    Aggregate and broadcast $
    \tau^{(i)}=
    \big[
    \tau_1^{(i)}, \dots, \tau_N^{(i)}
    \big]$
    
\For{$k = 1,\dots, N$}{
    \tcp{Executed in parallel by robot $k$}

    Receive state trajectories of other robots $\tau^{(i)}_{-k}$
    
    Sample $\{\mathcal{Y}^{(i)}_{k,s}\}_{s=1}^{S_k}
    \sim
    \mathcal{N}
    \left(
    \frac{Y^{(i)}_k}{\sqrt{\bar{\alpha}_{i}}},
    \left(
    \frac{1}{\bar{\alpha}_{i}}-1
    \right)I_{m_kT}
    \right)$

    Compute $\hat{Y}^{(i)}_k$ according to~\eqref{eq:local_weighted}

    Estimate
    $\nabla_{Y_k^{(i)}} \log p_i^k\left(Y_k^{(i)} \mid \tau_{-k}^{(i)} \right)$ via~\eqref{eq:local_gradient}

    Update $Y_k^{(i-1)}$ with~\eqref{eq:local_update}

    Send local state trajectory $\tau_k^{(i-1)}$ to the server
}
}
\Return{$\{Y_k^{(0)}\}_{k\in \Rcal}$}
\end{algorithm}

\begin{remark}
Robot $k$'s objective and dynamics are known only to robot $k$, and coupled
constraints are shared only among their participating robots (\Cref{assum:same_constraint}). Each robot
thus requires neither access to nor storage of global system information. The server also performs no optimization: it only aggregates and rebroadcasts trajectories, following a server-robot framework common in warehouse and delivery systems~\cite{wurman2008coordinating,gielis2022critical}.
\end{remark}


By distributing the denoising computation across robots, DMBD decomposes the global high-dimensional inference problem over the joint trajectory space of dimension $\sum_{k\in\Rcal} m_kT$ into independent local problems of dimension $m_kT$. Thus, DMBD enjoys improved sample efficiency and scalability with respect to $N$.

While DMBD decomposes the centralized reverse diffusion into local conditional processes, coupled constraints introduce dependencies among robots that alter how their trajectories are denoised, resulting in different reverse dynamics. Since $p_i(Y)$ is defined over the joint trajectory space, the centralized and local conditional scores generally differ. We therefore bound this discrepancy as follows:

\begin{prop}
\label{prop:score_bound} Let~\Cref{assum:same_constraint} hold. Suppose that, for each $k\in\Rcal$, the local denoiser 
$\mu_{k,i}(Y_k^{(i)},z) := \mathbb E[Y_k^{(0)}\mid Y_k^{(i)}, \tau_{-k}^{(0)}=z]$
is $H_k$-Lipschitz in $z$, i.e.,
$\|\mu_{k,i}(Y_k^{(i)},z) - \mu_{k,i}(Y_k^{(i)},z')\| \le H_k \|z-z'\|$ for all $Y_k^{(i)}, z, z'$ and $i\in \{1,\dots, M\}$. Then
for all denoising steps $i\in\{M,\dots,1\}$,
\eqn{
\label{eq:score_bound}
\left\|
\nabla_{Y_k^{(i)}}\log p_i(Y^{(i)})
-
\nabla_{Y_k^{(i)}}\log p_i^k\left(Y_k^{(i)} \mid \tau_{-k}^{(i)}\right)
\right\| \nonumber \leq  \\
\frac{\sqrt{\bar\alpha_i}}{1-\bar\alpha_i}\, H_k\,
\mathbb E\left[\, \|\tau_{-k}^{(0)} - \tau_{-k}^{(i)} \| \middle| \ Y^{(i)}  \right].
}
\end{prop}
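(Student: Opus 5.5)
The plan is to write both scores in Tweedie form and then compare the two conditional means they produce. Since the forward process acts coordinatewise with isotropic Gaussian noise, \eqref{eq:p_i_definition} gives for the joint marginal
\begin{align*}
\nabla_{Y^{(i)}}\log p_i(Y^{(i)}) = -\frac{Y^{(i)}}{1-\bar\alpha_i} + \frac{\sqrt{\bar\alpha_i}}{1-\bar\alpha_i}\,\mathbb E\big[Y^{(0)}\mid Y^{(i)}\big].
\end{align*}
Its $Y_k^{(i)}$ block is $-\frac{Y_k^{(i)}}{1-\bar\alpha_i} + \frac{\sqrt{\bar\alpha_i}}{1-\bar\alpha_i}\mathbb E[Y_k^{(0)}\mid Y^{(i)}]$. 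The local conditional score has the same structure, with the local denoiser evaluated at the broadcast trajectories:
\begin{align*}
\nabla_{Y_k^{(i)}}\log p_i^k\big(Y_k^{(i)}\mid \tau_{-k}^{(i)}\big) = -\frac{Y_k^{(i)}}{1-\bar\alpha_i} + \frac{\sqrt{\bar\alpha_i}}{1-\bar\alpha_i}\,\mu_{k,i}\big(Y_k^{(i)},\tau_{-k}^{(i)}\big).
\end{align*}
This holds because the local process noises only $Y_k$ while treating $\tau_{-k}$ as a fixed conditioning variable.

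Two facts justify using $\mu_{k,i}$ here. First, \Cref{assum:same_constraint} and the $1/|\Rcal_{k,p}|$ weighting in \eqref{eq:actual_global_cost} ensure that $\phi(Y)$, viewed as a function of $Y_k$ with $Y_{-k}$ fixed, equals $\phi_k(Y_k;\tau_{-k})$ plus a term independent of $Y_k$. Hence $p_0(Y_k\mid Y_{-k}) = p_0^k(Y_k\mid\tau_{-k})$, and the local conditional target is exactly the conditional of the global target. Second, the noise on $Y_k$ is independent of the noise on $Y_{-k}$, so $Y_k^{(i)}$ given $Y_k^{(0)}$ is independent of $Y_{-k}^{(i)}$ and of $\tau_{-k}^{(0)}$.

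Subtracting the two expressions cancels the linear terms and leaves
\begin{align*}
\frac{\sqrt{\bar\alpha_i}}{1-\bar\alpha_i}\Big\|\mathbb E[Y_k^{(0)}\mid Y^{(i)}] - \mu_{k,i}\big(Y_k^{(i)},\tau_{-k}^{(i)}\big)\Big\|.
\end{align*}

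To bound this term, apply the tower property conditioning additionally on $\tau_{-k}^{(0)}$. By the conditional independence in the Markov structure $Y^{(0)}_{-k}\to (Y_k^{(0)}, Y_k^{(i)})$ and $Y^{(0)}_{-k}\to Y^{(i)}_{-k}$, given $Y_k^{(i)}$ and $Y^{(0)}_{-k}$ (hence $\tau_{-k}^{(0)}$), the noisy $Y_{-k}^{(i)}$ carries no further information about $Y_k^{(0)}$. This gives
\begin{align*}
\mathbb E[Y_k^{(0)}\mid Y^{(i)}] = \mathbb E\big[\mu_{k,i}(Y_k^{(i)},\tau_{-k}^{(0)})\mid Y^{(i)}\big].
\end{align*}
One subtlety is that $\mu_{k,i}$ is defined by conditioning on $\tau_{-k}^{(0)}$ rather than on $Y_{-k}^{(0)}$. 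I would handle this by noting that the local target depends on $Y_{-k}$ only through $\tau_{-k}$. Since $\mu_{k,i}(Y_k^{(i)},\tau_{-k}^{(i)})$ is $Y^{(i)}$-measurable, the difference becomes $\mathbb E[\mu_{k,i}(Y_k^{(i)},\tau_{-k}^{(0)}) - \mu_{k,i}(Y_k^{(i)},\tau_{-k}^{(i)})\mid Y^{(i)}]$. Applying Jensen's inequality (norm inside the expectation) and then the $H_k$-Lipschitz hypothesis yields \eqref{eq:score_bound}.

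The main obstacle is the first step: rigorously identifying the local conditional score with the Tweedie form built from $\mu_{k,i}$ evaluated at the current estimate $\tau_{-k}^{(i)}$. This requires interpreting $p_i^k(\cdot\mid\tau_{-k}^{(i)})$ as the noised version of $p_0^k(\cdot\mid z)$ with $z=\tau_{-k}^{(i)}$ plugged in, and using \Cref{assum:same_constraint} so that this coincides with the true conditional $p_0(Y_k\mid \tau_{-k}=z)$. I would state this as the defining interpretation of the local process and verify the cost decomposition explicitly. The conditional-independence step in the tower argument is the second point to check carefully.
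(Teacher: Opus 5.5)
Your proposal is correct and follows essentially the same route as the paper's proof: the cost decomposition under \Cref{assum:same_constraint} giving $p_0(Y_k^{(0)}\mid Y_{-k}^{(0)})=p_0^k(Y_k^{(0)}\mid\tau_{-k}^{(0)})$, the factorized forward noise, Tweedie's formula for both scores, the tower property yielding $\mathbb E[Y_k^{(0)}\mid Y^{(i)}]=\mathbb E[\mu_{k,i}(Y_k^{(i)},\tau_{-k}^{(0)})\mid Y^{(i)}]$, and then Jensen's inequality plus the Lipschitz hypothesis. The paper simply asserts the identification of the local score with the Tweedie form at $z=\tau_{-k}^{(i)}$, the step you flag as the main obstacle, so treating it as the defining interpretation of the local process matches the paper and is slightly more explicit.
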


\begin{proof}
For simplicity, we drop the notation $(i)$ for the rest of the proof.
By~\Cref{assum:same_constraint}, we can rewrite $\phi$~\eqref{eq:actual_global_cost} into
\eqnN{\phi(Y) &= J(Y) + \sum_{k\in\Rcal, p\in \Ccal_k} \frac {1}{|\Rcal_{k,p}|} g_{k,p}(Y)\\ 
& =\underbrace{J_k(Y_k) + \sum_{ p\in \Ccal_k} g_{k,p}(\tau_k, \tau_{-k})}_{\phi_k(Y_k;\tau_{-k})}+ \phi_{-k}(Y_{-k})}
where $\phi_{-k}$ is a function that does not depend on $Y_k$. Because $\phi_{-k}(Y_{-k})$ is constant with respect to $Y_k$, we get 
\eqn{p_0(Y_k^{(0)}\mid Y_{-k}^{(0)})= 
p_0^k(Y_k^{(0)}
\mid \tau_{-k}^{(0)}). \label{eq:factor1}
}

Under the forward diffusion process, the diffusion noises are independent across robots. Thus, we get
\eqn{p_{i\mid 0}(Y\mid Y^{(0)}) = \prod_{k=1}^N p_{i\mid 0}^k(Y_k\mid Y^{(0)}_k) \label{eq:factor2}.
}

By Tweedie's formula, we get:
\seqn[\label{eq:tweedie}]{
\nabla_{Y_k}\log p_i(Y) & = \frac{\sqrt{\bar\alpha_i}\,\mathbb E[Y_k^{(0)}\mid Y] - Y_k}{1-\bar\alpha_i}, \\
\nabla_{Y_k}\log p_i^k(Y_k\mid \tau_{-k}) & = \frac{\sqrt{\bar\alpha_i}\,\mu_{k,i}(Y_k,\tau_{-k}) - Y_k}{1-\bar\alpha_i}.
}
From~\eqref{eq:tweedie}, we get 
\eqn{\label{eq:intermediate}
\left\|
\nabla_{Y_k}\log p_i(Y)
-
\nabla_{Y_k}\log p_i^k\left(Y_k \mid \tau_{-k}\right)
\right\| \nonumber = \\
\frac{\sqrt{\bar\alpha_i}}{1-\bar\alpha_i}\left\|\mathbb E[Y_k^{(0)}\mid Y] - \mu_{k,i}(Y_k,\tau_{-k})\right\|.}

By~\eqref{eq:factor2}, $Y_{-k}$ is generated from $Y_{-k}^{(0)}$ through independent
forward noise, so $Y_{-k}$ is conditionally independent of $Y_k^{(0)}$ given
$Y_k$ and $Y_{-k}^{(0)}$. Also, by~\eqref{eq:factor1}, $Y_k^{(0)}$ depends on
$Y_{-k}^{(0)}$ only through $\tau_{-k}^{(0)}$. Then, the tower property gives
\seqn[\label{eq:factor3}]{
\mathbb E[Y_k^{(0)}\mid Y]
& = \mathbb E\!\left[
\mathbb E[
Y_k^{(0)}
\mid
Y_k, Y_{-k}, Y_{-k}^{(0)}
]
\mid
Y
\right] \\ & = 
\mathbb E\!\left[
\mathbb E[
Y_k^{(0)}
\mid
Y_k,\tau_{-k}^{(0)}
]
\mid
Y
\right] \\
& =
\mathbb{E}\left[
\mu_{k,i}(Y_k,\tau_{-k}^{(0)})
\mid Y
\right].
}

Applying~\eqref{eq:factor3}, Jensen's inequality (with the convexity of the norm), and Lipschitz continuity of $\mu_{k,i}$, we get
\eqn{
\Big\|\mathbb{E}[Y_k^{(0)}\mid Y] & -\mu_{k,i}(Y_k,\tau_{-k})\Big\| \label{eq:intermediate2}\\ &= \left\|\mathbb{E} \Big[ \mu_{k,i}(Y_k,\tau_{-k}^{(0)})\mid Y\Big] -\mu_{k,i}(Y_k,\tau_{-k}) \right\| \nonumber \\ &= \left\|\mathbb{E} \Big[ \mu_{k,i}(Y_k,\tau_{-k}^{(0)})  -\mu_{k,i}(Y_k,\tau_{-k}) \mid Y \Big]\right\| \nonumber \\ &\leq \mathbb{E} \Big[ \left\|\mu_{k,i}(Y_k,\tau_{-k}^{(0)})  -\mu_{k,i}(Y_k,\tau_{-k})\right\|  \mid Y \Big]\nonumber \\ 
& \leq H_k \mathbb{E}\Big[ \big\|\tau_{-k}^{(0)}-\tau_{-k} \big\|\mid Y \Big] \nonumber.}
Combining~\eqref{eq:intermediate} with~\eqref{eq:intermediate2}, we get~\eqref{eq:score_bound}.
\end{proof}

\Cref{prop:score_bound} isolates two sources of discrepancy introduced by the decomposition. The term $\mathbb{E}[\|\tau_{-k}^{(0)}-\tau_{-k}^{(i)}\|  \mid Y^{(i)}]$ measures the discrepancy between the other robots' current estimates and their fully denoised counterparts, which decreases as the denoised trajectories converge. Meanwhile, $H_k$ measures the sensitivity of robot $k$'s denoiser to changes in those trajectories. Thus, the discrepancy vanishes as either term approaches zero. In particular, if all robots have independent constraints, i.e., $\Rcal_{k,p}=\{k\}$ for all $p\in\Ccal_k$, then $\phi_k$ is independent of $\tau_{-k}$. Consequently, $\mu_{k,i}$ is constant with respect to its second argument, yielding $H_k=0$. In this case, DMBD exactly reduces to each robot independently running MBD on its own subproblem.

\begin{remark}
The bound in~\Cref{prop:score_bound} is stated for the exact score functions. In practice, however, both MBD and DMBD estimate these scores using finite sets of samples (as in~\eqref{eq:gradient} and~\eqref{eq:local_gradient}). Consequently,~\eqref{eq:score_bound} characterizes the asymptotic behavior as the number of particles satisfies $S\to\infty$ and $S_k\to\infty$. With finitely many samples, the estimated scores may incur additional Monte Carlo error; a finite-sample analysis is left to future work.
\end{remark}

\begin{remark}
    \Cref{prop:score_bound} depends on the Lipschitz assumption of the local denoiser $\mu_{k,i}(\cdot, z)$ with respect to $z$. Such assumption can be satisfied when $p_0^k(\cdot\mid z)$ varies Lipschitz continuously in $1$-Wasserstein distance~\cite[Remark 6.5]{villani2009optimal}. 
\end{remark}

\section{Simulations}
\label{sec:sim}

In this section, we present simulation results to answer two questions: \textbf{Q1.} Is DMBD scalable with respect to the number of robots $N$? \textbf{Q2.} Can robots generate trajectories that minimize the global cost without global knowledge (e.g., global dynamics, objective, and constraints)? All simulations are coded in Python with JAX~\cite{jax2018github} for GPU acceleration and were conducted on a computer with a 12th Gen Intel® Core™ i9-12900KF CPU, 64 GB of RAM, and an Nvidia RTX 3080 Ti GPU.  While our algorithm can be used for online trajectory planning, we focus on offline planning only.

We consider two systems. First, double integrator dynamics with state
$\xbf_t^k = \begin{bmatrix}
\mathbf{p}_t^{k\top} & \mathbf{v}_t^{k\top}
\end{bmatrix}^\top$ and control $\ubf_t^k = \begin{bmatrix}
    a_{x,t}^k & a_{y,t}^k
\end{bmatrix}^\top$, where $\mathbf{p}_t^k = \begin{bmatrix}
    x_t^k & y_t^k
\end{bmatrix}^\top$ and $\mathbf{v}_t^k = \begin{bmatrix}
    v_{x,t}^k & v_{y,t}^k
\end{bmatrix}^\top$. 
The system evolves as
{\small
\eqn{
\label{eq:double_int}
\begin{aligned}
\mathbf{p}_{t+1}^k = \mathbf{p}_t^k + \mathbf{v}_t^k \Delta t + \tfrac{1}{2}\ubf_t^k \Delta t^2,\ \ 
\mathbf{v}_{t+1}^k = \mathbf{v}_t^k + \ubf_t^k \Delta t,
\end{aligned}
}}with $\|\ubf_t^k\|_\infty\leq 1$ and $\|\mathbf{v}_t^k\|_\infty \leq 5$, where $\Delta t=0.1$ is the sampling time. Second, we consider kinematic bicycle dynamics~\cite{polack2017kinematic} with state 
$\xbf_t^k=\begin{bmatrix} x_t^k & y_t^k & \theta_t^k & v_t^k \end{bmatrix}^\top$ and control $\ubf_t^k=\begin{bmatrix} a_t^k & \delta_t^k \end{bmatrix}^\top$, subject to $|v_t^k|\leq 5$, $|a_t^k|\leq 1$, and $|\delta_t^k|\leq 0.25$. The dynamics are
{\small
\seqn[\label{eq:kinematic}]{
x_{t+1}^k &= x_t^k + v_t^k \cos(\theta_t^k)\Delta t, \label{eq:kinematic_1}\\
y_{t+1}^k &= y_t^k + v_t^k \sin(\theta_t^k)\Delta t, \label{eq:kinematic_2}\\
\theta_{t+1}^k &= \theta_t^k + \frac{v_t^k}{L^k}\tan(\delta_t^k)\Delta t, \label{eq:kinematic_3}\\
v_{t+1}^k &= v_t^k + a_t^k \Delta t,\label{eq:kinematic_4}
}}where  $\Delta t=0.25$ is the sampling time and $L^k$ is the wheelbase length of agent $k$.

We consider circular and rectangular robots. A circular robot $k$ is modeled by its radius $R_{{\rm col}}^k$, and collision avoidance between robots $k$ and $j$ is enforced through

\eqnN{h^{k,j}_{\rm cir}(\xbf^k_t, \xbf^j_t)=\max\{0, (R_{{\rm col}}^k+R_{{\rm col}}^j)^2 - \|\mathbf{p}^k_t-\mathbf{p}^j_t\|_2^2\}.}

Collision checks among rectangular robots with dimensions $d_{1,k}\times d_{2,k}$ are handled using the separating axis theorem (SAT)~\cite{ericson2005real}, denoted
\eqn{h^{k,j}_{\rm rec}(\xbf^k_t, \xbf^j_t)= \Psi_{\rm SAT}(\xbf^k_t, \xbf^j_t),}
where $\Psi_{\mathrm{SAT}}
(\mathbf{x}_t^k,\mathbf{x}_t^j)= 0$
indicates that robots $k$ and $j$ do not overlap. We want to enforce these constraints for all $t\in\Tcal:=\{0,\dots,T\}$. Thus, the robot $k$ has a collision constraint $p\in \Ccal_k$ with robot $j\in \Rcal_{k,p}=\{k,j\}$ as 
\begin{equation}
\label{eq:collision_constraint}
g_{k,p}(Y)
=
\begin{cases}
\sum_{t\in\Tcal} h_{\mathrm{cir}}^{k,j}(\mathbf{x}_t^k,\mathbf{x}_t^j),
\ \text{circular robots},\\
\sum_{t\in\Tcal}  h_{\mathrm{rec}}^{k,j}(\mathbf{x}_t^k,\mathbf{x}_t^j),
\ \text{rectangular robots}.
\end{cases}
\end{equation}

To enforce collision avoidance with the static circular obstacles $\Ocal_q$, we define the obstacle collision constraint $p\in \Ccal_k$ analogously to~\eqref{eq:collision_constraint}. Specifically,
\begin{equation}
\label{eq:obstacle_constraint}
g_{k,p}(Y)=
\begin{cases}
\sum_{t\in\Tcal} h_{\mathrm{cir}}^{k,q}(\mathbf{x}_t^k,\Ocal_q),
\ \text{circular robots},\\
\sum_{t\in\Tcal} h_{\mathrm{rec}}^{k,q}(\mathbf{x}_t^k,\Ocal_q),
\ \text{rectangular robots},
\end{cases} \nonumber
\end{equation}
with $h_{\mathrm{cir}}^{k,q}$ denoting the distance-based circle-to-circle collision test and
$h_{\mathrm{rec}}^{k,q}$ the closest-point projection test between an oriented rectangle and a circular obstacle~\cite{ericson2005real}.

\textbf{Q1. Scalability:}
To answer \textbf{Q1}, we compare DMBD against four SBO baselines across different environments: Cross-Entropy Method (CEM)~\cite{botev2013cross}, Model Predictive Path Integral (MPPI)~\cite{williams2018information}, MBD~\cite{pan2024model}, and D4orm~\cite{zhang2025d4orm}. All methods
\begin{itemize}
\item optimize trajectories in the control-input space with the same planning horizon $T$ and denoising steps (iterations for CEM and MPPI) $M$,
\item sample 500 candidate control trajectories \emph{per robot} to match the total number of samples, and
\item use cost parameters tuned once using the MBD on the case with $N=2$ for each scenario.
\end{itemize}
For CEM, we use a 2\% elite set (10 samples per robot), while D4orm uses two deformation iterations ($2M$ denoising steps). To evaluate them, we measure each method using two metrics: (i) the \textit{success rate}, defined as the percentage of trials that produce a collision-free solution, and (ii) the \textit{planning time}, measured as the time required to generate a trajectory from the initial states $\xbf_0$.

\begin{figure}
    \centering
    \includegraphics[width=\linewidth]{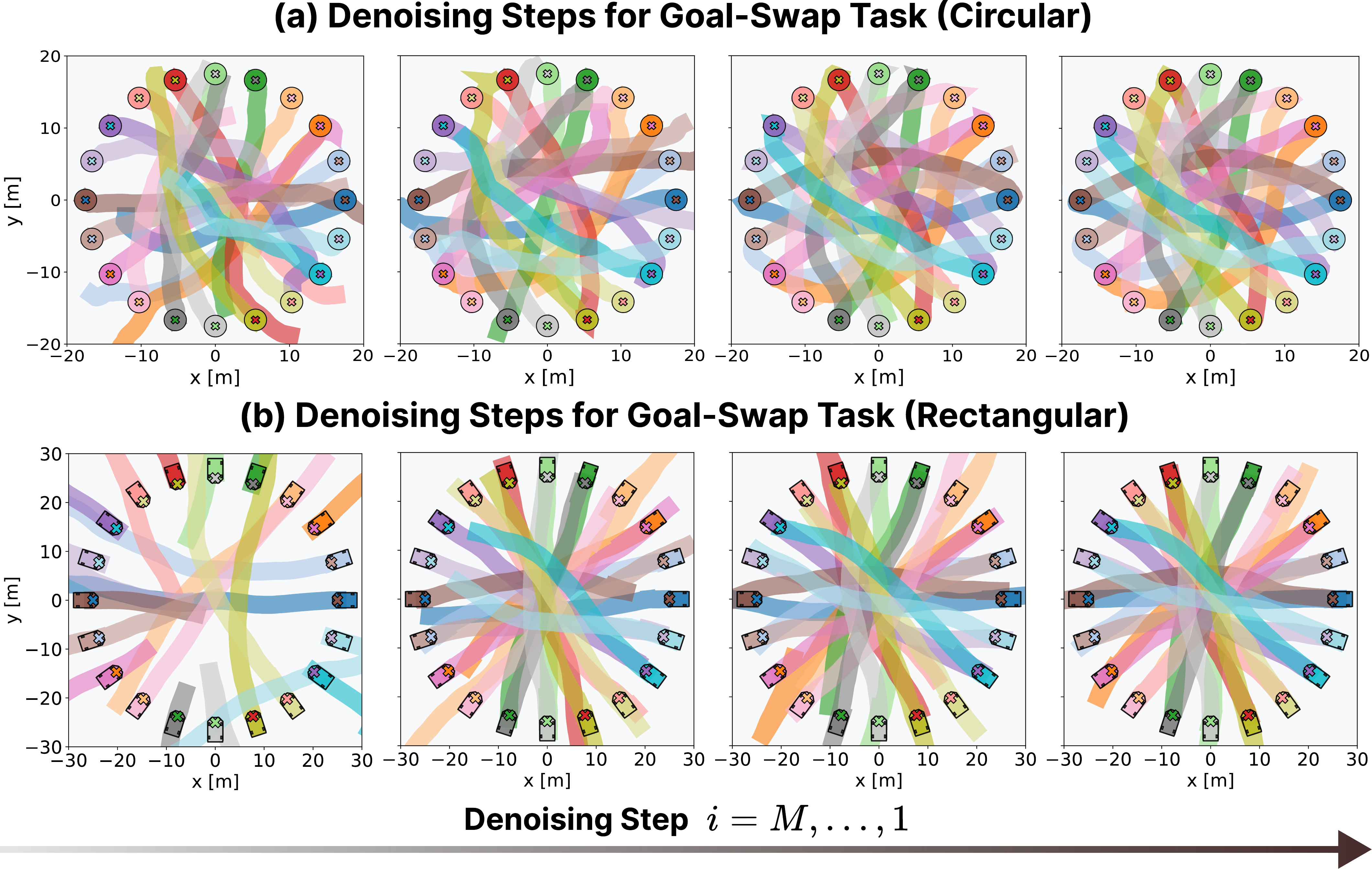}
    \caption{DMBD's local conditional denoising process for goal swapping with $N=20$ (a) circular and (b) rectangular robots.}
    \label{fig:goal_swaps}
\end{figure}

\subsubsection{\textbf{Goal-Swap}}
We evaluate a circular goal-swapping task with (i) circular robots of radius $R_{{\rm col}}^k=1.5\,\mathrm{m}$ under~\eqref{eq:double_int} and (ii) rectangular robots of size $3.0\times5.0$ under~\eqref{eq:kinematic} with $L^k=3.0$. Robots are initialized uniformly on circles of radius $17.5\,\mathrm{m}$ and $25.0\,\mathrm{m}$, respectively, with diametrically opposite goals (shown in~\Cref{fig:goal_swaps}). We consider $N\in\{2,4,\dots,20\}$, set $M=150$, and increase the horizon from $T=200$ by 5 steps per additional two robots. A trial succeeds if all robots remain collision-free and reach within $2.5\,\mathrm{m}$ of their goals.  We run 50 trials with varying seeds.

\begin{figure}
    \centering
    \includegraphics[width=0.95\linewidth]{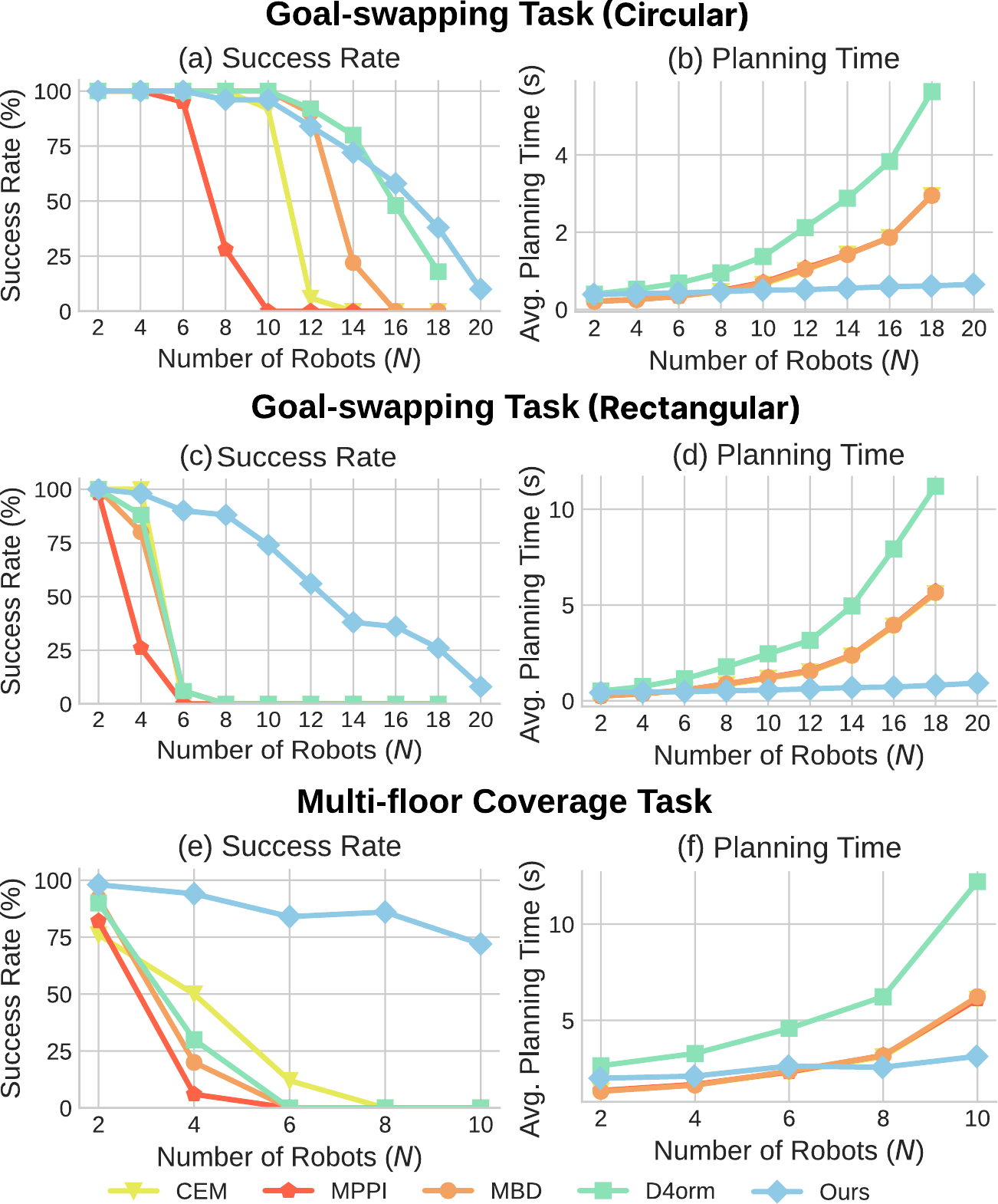}
    \caption{Success rates and average planning time across $50$ trials for varying $N$. Baseline methods fail at $N=20$ due to out-of-memory errors for goal-swapping tasks.}
    \label{fig:comparison}
\vspace{-15pt}
\end{figure}



\begin{figure*}
    \centering
    \includegraphics[width=0.85\linewidth]{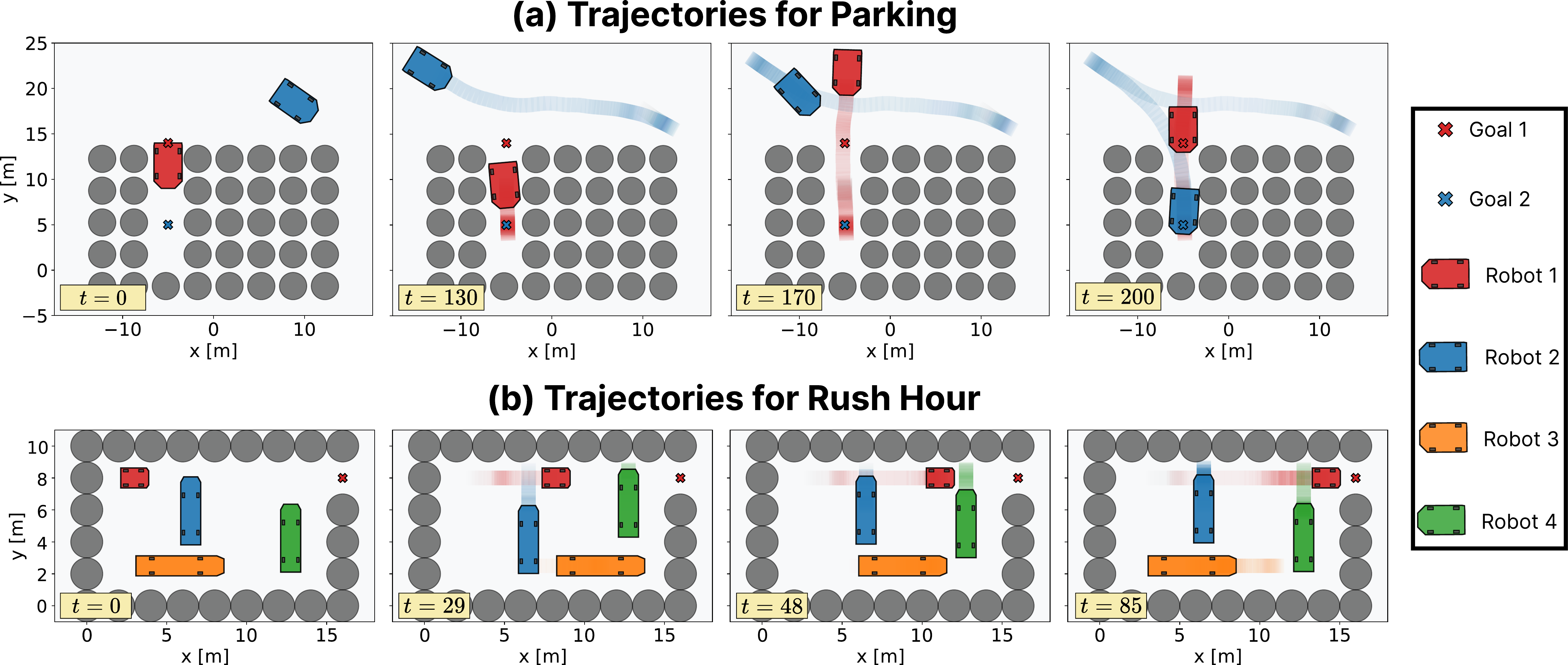}
\caption{Visualizations of representative planned trajectories through DMBD for the (a) parking and (b) rush hour scenarios.}
    \label{fig:second_sim}
\vspace{-15pt}
\end{figure*}

\subsubsection{\textbf{Multi-Floor Coverage}}
We consider a heterogeneous team of rectangular robots navigating a two-story building. The team consists of smaller robots $\mathcal{E}=\{1,\dots,N/2\}$ with dimensions $2.0\times4.5$ and larger robots $\Rcal\setminus\mathcal{E}$ with dimensions $2.5\times6.5$. All robots follow the kinematic model~\eqref{eq:kinematic} with augmented state $\xbf_t^k=\begin{bmatrix}x_t^k&y_t^k&z_t^k&\theta_t^k&v_t^k\end{bmatrix}^\top$, where $z_t^k$ denotes the floor height. The wheelbase is $L^k=3.0$ for $k\in\mathcal{E}$ and $L^k=4.2$ otherwise. Smaller robots may move up to the second floor ($z=5$) only through four elevator zones, whereas larger robots remain on the first floor ($z=0$). Eight cylindrical obstacles of radius $1.0\,\mathrm{m}$ are placed at $(\pm20,0,z)$ and $(0,\pm20,z)$, $z\in\{0,5\}$.

We evaluate team sizes $N\in\{2,4,\dots,10\}$ with $M=200$ denoising steps and increase the horizon from $T=250$ by 25 steps per additional two robots. A trial succeeds if all robots remain collision-free and reach their assigned goals within $5.0\,\mathrm{m}$ on the correct floor. We run each configuration 50 times with randomly generated initial states $(x_0^k,y_0^k,z_0^k)\in[-30,30]^2\times\{0\}$, $\theta_0^k\in[-\pi,\pi]$, and $v_0^k=0$, and goals in $[-20,20]^2\times\{0,5\}$, where smaller and larger robots have $z=5$ and $z=0$, respectively.

\subsubsection{\textbf{Results and Discussions}} 

As shown in~\Cref{fig:comparison}, DMBD in general attains the highest success rate in the rectangular goal-swapping and multi-floor coverage tasks while maintaining near-constant planning time as $N$ increases. In contrast, baselines suffer from rapidly increasing planning time and decreasing success rate. D4orm performs comparably to DMBD in the circular-robot goal-swapping scenario but requires approximately 10$\times$ longer planning time. All baselines also eventually fail at $N=20$ due to out-of-memory (OOM) errors when sampling from the high-dimensional joint trajectory space.

This scalability advantage arises because baseline methods optimize over the coupled trajectory space of dimension $(\sum_{k=1}^N m_k)T$, which grows with team size, whereas DMBD decomposes inference into per-robot trajectory spaces of dimension $m_kT$. Furthermore, the performance gap widens as task complexity increases, from circular robots to rectangular robots and multi-floor coverage, where rectangular-body collision constraints and elevator dynamics introduce additional non-smoothness. These results highlight the advantage of DMBD for challenging coordination problems.

\begin{table}[t]
\centering
\caption{Success Rate and Planning Time Across Different Tasks}
\label{tab:difficult_cases}

\small
\renewcommand{\arraystretch}{1.15}

\begin{tabular}{lcc}
\toprule
\textbf{Metric}
& \textbf{Parking} 
& \textbf{Rush Hour} \\

\midrule
 \begin{tabular}[c]{@{}c@{}}\textbf{Success (\%)} \\ \textbf{Planning Time (s)}\end{tabular} & \begin{tabular}[c]{@{}c@{}} 92 \\ 0.52 \end{tabular} & \begin{tabular}[c]{@{}c@{}} 92 \\ 0.39 \end{tabular}  \\ 
\midrule
\end{tabular}
\vspace{-15pt}
\end{table}

\textbf{Q2. Coordination without global knowledge:} To answer \textbf{Q2}, we consider two challenging environments to evaluate whether robots can cooperatively solve the global optimization problem without access to global information, including global objectives, dynamics, and constraints.
\setcounter{subsubsection}{0}
\subsubsection{\textbf{Parking}} We consider a parking scenario with two rectangular robots under the kinematic bicycle model~\eqref{eq:kinematic}, shown in~\Cref{fig:second_sim}(a). To reach their designated parking spots, robot~1 (red) must temporarily leave its goal location, allowing robot~2 (blue) to park, before returning. A trial is considered successful if both robots reach within $2.5\,\mathrm{m}$ of their goal locations without collisions. We run 50 trials with robot~2's initial position sampled uniformly from $(x_0^k,y_0^k)\in[-15,15]\times[17.5,25]$ and $\theta_0^k\in [0,2\pi]$ with $v_0^k=0$.

\subsubsection{\textbf{Rush Hour}} We consider a four-robot ``Rush Hour'' puzzle in which each rectangular robot is restricted to longitudinal motion with state $\xbf_t^k= \begin{bmatrix}
    x_t^k & y_t^k&\theta_t^k&v_t^k
\end{bmatrix}^\top$ and dynamics~\eqref{eq:kinematic_1},~\eqref{eq:kinematic_2}, and~\eqref{eq:kinematic_4}. As illustrated in~\Cref{fig:second_sim}(b), robot~1 must exit the environment, but its path is blocked by other robots whose motions are mutually constrained. Robots~2-4 seek only to minimize their own control effort. A trial is successful if robot~1 reaches within $1.5\,\mathrm{m}$ of its goal without collisions. We evaluate 50 trials with randomized initial longitudinal positions of the robots 2-4.

\subsubsection{\textbf{Results and Discussions}} 

These tasks require robots to coordinate and execute strategically suboptimal local actions without knowing others' goals, while considering highly non-convex and non-smooth local objectives and constraints. Nevertheless, DMBD achieves over $90\%$ success rates across both tasks within sub-second planning times (Table~\ref{tab:difficult_cases}). The resulting coordination emerges without sharing the global objectives, demonstrating that local conditional denoising enables the robots to recover globally coordinated behavior.

\section{Conclusions}

\label{sec:conc}
We present Distributed Model-Based Diffusion (DMBD), a scalable server-robot framework for multi-robot trajectory optimization. By distributing the reverse diffusion process, DMBD enables each robot to perform local conditional denoising using only local information while conditioning on the trajectory estimates from other robots. We characterize the relationship between the global and local conditional denoising updates by deriving a theoretical error bound at each denoising step. Extensive simulations across diverse scenarios demonstrate that DMBD outperforms existing baselines.





\bibliographystyle{IEEEtran}
\bibliography{references_ll}

\end{document}